\documentclass{article}
\pdfoutput=1

\usepackage[english]{babel}

\usepackage[letterpaper,top=2cm,bottom=2cm,left=3cm,right=3cm,marginparwidth=1.75cm]{geometry}

\usepackage[colorlinks=true, allcolors=blue]{hyperref}
\usepackage{natbib}

\usepackage{xcolor}
\usepackage{parskip}
\usepackage{graphicx}
\usepackage{algpseudocode}
\usepackage{amsmath,amsthm,amssymb,amsfonts,dsfont,bm}
\usepackage{subcaption}
\usepackage{float}
\usepackage{enumitem}
\usepackage[ruled,vlined]{algorithm2e}

\usepackage[capitalize,noabbrev]{cleveref}
\usepackage{booktabs}
\usepackage{tabularx}
\usepackage{makecell}

\allowdisplaybreaks

\theoremstyle{plain}
\newtheorem{theorem}{Theorem}[section]

\newtheorem{lemma}[theorem]{Lemma}

\theoremstyle{definition}

\newtheorem{assumption}[theorem]{Assumption}
\theoremstyle{remark}

\title{\textbf{Beyond Precision: Training-Inference Mismatch is an Optimization Problem and Simple LR Scheduling Fixes It}}
\author{
Yaxiang Zhang$^{1}$\thanks{Equal contribution. The first two authors are listed in alphabetical order.},~
Yingru Li\footnotemark[1]~\thanks{Corresponding author: Yingru Li, \texttt{szrlee@gmail.com}.},~
Jiacai Liu$^{2}$,~
Jiawei Xu$^{3}$,~
Ziniu Li$^{3}$,~
Qian Liu,~
Haoyuan Li$^{4}$ \vspace{0.1in}\\
 $^{1}$ National University of Singapore, ~~$^{2}$ Fudan University,  ~~$^{3}$ CHUK-Shenzhen
 \\ $^{4}$ University of Science and Technology of China
\date{}
}
\begin{document}
\maketitle

\begin{abstract}
Reinforcement Learning (RL) for training Large Language Models is notoriously unstable. While recent studies attribute this to ``training-inference mismatch'' stemming from inconsistent hybrid engines, standard remedies—such as Importance Sampling—might fail during extended training runs. In this work, we analyze this instability through the lens of optimization, demonstrating that gradient noise and training-inference mismatch escalate in tandem as training progresses. Meanwhile, we find that the mismatch can be effectively suppressed by shrinking the update size. Taken together, we deduce that the mismatch is not merely a static numerical discrepancy, but a dynamic failure coupled with the model's optimization. Based on this insight, we propose a simple yet effective solution: a specialized Learning Rate (LR) scheduler. Instead of pre-defined decay schedule in traditional LR scheduler, our method dynamically triggers LR decay based on response length, which we identify as a reliable early-warning signal for impending instability. Empirical evidence suggests that by reducing the learning rate as gradient noise rises, we can consistently stabilize RL training and keep the training-inference mismatch at a safe level.
\end{abstract}

\section{Introduction} \label{Sec:Introduction}
Reinforcement Learning (RL) has emerged as a pivotal paradigm for fine-tuning Large Language Models (LLMs), demonstrating remarkable success in increasing complex reasoning capabilities \citep{guo2025deepseek,rafailov2023direct,xie2025logic}. Despite these empirical gains, RL training is notoriously fragile. The optimization process often encounters abrupt ``collapses''—sharp declines in training rewards and validation accuracy before reaching potential convergence \citep{yao2025your,liuspeed,zheng2025group,cui2025entropy}. Such instability not only hinders the full exploitation of high-quality training data but also severely compromises sample efficiency and computational economy.

A growing body of literature identifies \textbf{the training-inference mismatch} as a primary catalyst for this instability \citep{yao2025your,liuspeed,team2025every}. This discrepancy is an inherent byproduct of the heterogeneous engine utilized in large-scale RL: while specialized inference kernels (e.g., vLLM~\citep{kwon2023efficient}/SGLang~\citep{zheng2024sglang}) are deployed to accelerate massive rollout generation, the subsequent gradient computation necessitates distributed training frameworks (e.g., FSDP~\citep{zhao2023pytorch}/Megatron~\citep{shoeybi2019megatron}). Due to the \textit{non-associativity} of floating-point arithmetic in finite precision, the distinct operation sequences across these engines manifest as subtle numerical variances. While seemingly negligible, these discrepancies are amplified by low-precision formats (e.g., BF16/FP16), hardware-specific kernel optimizations, and the inherent non-linearity of deep architectures, which could finally lead to significant noise in the optimization process. 

To mitigate this, previous studies have predominantly relied on Importance Sampling (IS) variants. For instance, \citep{yao2025your} introduces token-level IS to correct the likelihood ratio between training and inference engines, while \citep{liuspeed} proposes sequence-level corrections, arguing that token-level adjustment is insufficient to fully stabilize training due to the wrong correction level. However, these methods typically necessitate heuristic ``masking'' or ``clipping'' to suppress the influence of dangerous extreme samples. As we'll discuss later, this leaves fundamental flaw in their design and can even precipitate catastrophic collapse in extended training runs.

In this paper, we challenge the prevailing characterization of training-inference mismatch as a \textit{static} numerical artifact. While such discrepancies might initially appear as state-independent random noise stemming from precision limits, our analysis reveals that the magnitude of the mismatch is deeply coupled with the model's location in the weight space. Crucially, we observe that the mismatch can be effectively suppressed by modulating the update magnitude, suggesting a dynamic interplay between optimization dynamics and numerical stability. 

Our main contributions are summarized as follows:

\begin{itemize}
\item \textbf{Dynamic Instability Analysis}: We empirically demonstrate that training-inference mismatch is not a stationary background noise. Instead, it is a dynamic failure mode that escalates in tandem with gradient noise and the evolution of the optimization of the model.
\item \textbf{Critique of Epoch-based Scheduling}: We show that traditional Learning Rate (LR) schedulers, governed by predefined epoch counts, are ill-suited for RL training. We provide evidence that the epoch count is a poor proxy for the Signal-to-Noise Ratio (SNR) of gradients and the optimization dynamics, leading to suboptimal or premature LR decay.
\item \textbf{Adaptive length-decay Scheduler}: We propose a novel, reactive LR scheduler that dynamically triggers decay based on a newly identified ``early-warning signal'': the surge in average response length. It halves the learning rate every \texttt{decay\_period} step, until it reaches the preset minimal learning rate. Our results suggest that a \texttt{decay\_period} proportional to this surge ($\approx 1.8\times$) consistently stabilizes training and maintains the mismatch at a safe level.
\end{itemize}

This paper is organized as follows: ~\cref{Sec:Background} introduces the background about training-inference mismatch and other related works; ~\cref{Sec:Method} details our motivation and the proposed methodology; ~\cref{Sec:experiment} presents comprehensive experimental evaluations; ~\cref{Sec:conclusion} summarizes our findings and proposes some possible directions for future work.

\section{Background and Related Work} \label{Sec:Background}
\subsection{Training-Inference Mismatch and Importance Sampling Correction}
The goal of RL is to optimize the following objective over training policy \( \textcolor{blue}{\pi}_{\theta}(y|x)\):
\begin{equation}
    \mathcal{J}(\theta) = \mathbb{E}_{x \sim p_x} \left[ \mathbb{E}_{y \sim \textcolor{blue}{\pi}_{\theta}}[R(x,y)] \right],
\end{equation}
where \(x\) is the prompt sampled from a distribution \(p_x\) and $y=(y_1, y_2, \cdots, y_T)$ is the generated response with horizon $T$. The reward function, $R(x, y)$, quantifies the quality or correctness of the response $y$ given the prompt $x$. 
To optimize this objective, the policy gradient is typically estimated using the REINFORCE estimator \citep{williams1992simple}:
\begin{equation} \label{eq:ideal graident}
    \nabla_{\theta} \mathcal{J}(\theta) = \mathbb{E}_{x \sim p_x}  \mathbb{E}_{y \sim \textcolor{blue}{\pi}_{\theta}} \left[ \nabla_{\theta} \log \textcolor{blue}{\pi}_{\theta}(y|x) R(x,y)\right].
\end{equation}
However, in modern RL frameworks for LLM fine-tuning, different engines are used for rollout and update to maximize system efficiency. As mentioned above, finite-precision floating-point arithmetic does not satisfy the associative law, hence different calculation orders in different engines would lead to discrepancy between training policy \( \textcolor{blue}{\pi}_{\theta}\) (used for update) and rollout policy \( \textcolor{red}{\mu}_{\theta}\). Hence, the real gradient is:
\begin{equation} \label{eq:biased gradient}
    \nabla_{\theta} \mathcal{J}_{actual}(\theta) = \mathbb{E}_{x \sim p_x}  \mathbb{E}_{y \sim \textcolor{red}{\mu}_{\theta}} \left[ \nabla_{\theta} \log \textcolor{blue}{\pi}_{\theta}(y|x) R(x,y)\right].
\end{equation}
This discrepancy introduces an off-policy bias, as the gradient is evaluated on trajectories sampled from $\textcolor{red}{\mu}_{\theta}$ rather than the target policy $\textcolor{blue}{\pi}_{\theta}$.  A natural and principal solution for this mismatch is to use Importance Sampling(IS). IS is a classical technique for obtaining an unbiased estimate of an expectation when the sampling distribution differs from the target distribution. The corrected estimation of policy gradient is:
\begin{equation}
    \nabla_{\theta} \mathcal{J}_{is}(\theta) = \mathbb{E}_{x \sim p_x, y \sim \textcolor{red}{\mu}_{\theta}}   \left[ \rho \cdot \nabla_{\theta} \log \textcolor{blue}{\pi}_{\theta}(y|x) R(x,y)\right],
\end{equation}
where \( \rho = \frac{\textcolor{blue}{\pi}_{\theta}(y|x)}{\textcolor{red}{\mu}_{\theta}(y|x)}\) denotes the probability ratio.

There are still practical questions left for this correction. The first one is -- we should apply IS on what level? i.e, what does response \(y\) exactly mean? \citep{yao2025your} argues that we can take \(y\) as a single token, calculate probability ratio for each token and apply IS in a response sequence auto-regressively. We refer to this method as token-level IS. However, \citep{liuspeed} points out that token-level IS is still biased under a sequence perspective. To fix this, they propose sequence-level(seq-level) IS, i.e treat the whole response sequence as \(y\). \\
Whether token-level or seq-level, vanilla IS is still infeasible in practice due to high variance, especially in the LLM-RL training where response sequences are long, leading to extreme probability ratios. To curb this, some extra techniques to trade variance with bias are usually necessary, such as Truncated Importance Sampling (TIS) \citep{espeholt2018impala,yao2025your} or Masked Importance Sampling (MIS) \citep{liuspeed,zheng2025group}:
\begin{equation}
    \nabla_{\theta} \mathcal{J}_{tis}(\theta) = \mathbb{E}  \left[ \min \left( \rho , C\right) \nabla_{\theta} \log \textcolor{blue}{\pi}_{\theta}(y|x) R(x,y) \right],
\end{equation}
\begin{equation}
    \nabla_{\theta} \mathcal{J}_{mis}(\theta) = \mathbb{E}  \left[ \rho \cdot \mathbb{I}{(\rho \le C)} \nabla_{\theta} \log \textcolor{blue}{\pi}_{\theta}(y|x) R(x,y) \right],
\end{equation}
where \(C\) is a clipping hyperparameter, \( \mathbb{I} \{ \cdot \}\) is the indicator function, \(x,y\) are drawn from \(x \sim p_x, y \sim \textcolor{red}{\mu}_{\theta}(y|x)\) to compute expectation. This remedy introduces a new hyperparameter \(C\) which requires extra tuning. Moreover, the truncation or mask in fact reintroduces bias in gradient estimation and breaks the theoretical soundness. Our experiments show that these corrections sometimes are only able to extend stable training window and still suffer from possible training collapse.

\subsection{Related Work}

\textbf{Other stability techniques} Beyond importance sampling, other research attempts to mitigate training-inference mismatch through engineering workarounds. These include adjusting numerical precision——such as employing FP32 heads \citep{chen2025minimax} or switching between BF16 to FP16 formats \citep{qi2025defeating}——though these have proven insufficient to prevent training collapse in prior work\citep{yao2025your,liuspeed} and our ablation experiment. While manual alignment of training and inference implementations has achieved progress \citep{team2025every}, such "bespoke" fixes demand substantial engineering efforts, and generalization across evolving frameworks or architectures remains unclear.

The persistence of this mismatch is rooted in fundamental computational differences. For instance, the transition from auto-regressive generation during inference to parallel processing during training alters the sequence of floating-point operations. The inherent discrepancies suggest that engineering workarounds are insufficient and inefficient, highlighting the need for a more robust solution grounded in optimization dynamics.

\textbf{Learning Rate Scheduler} Learning Rate (LR) scheduling is a cornerstone of deep learning optimization, which critically influences both convergence speed and generalization. The conventional rationale for LR decay is rooted in the dynamics of stochastic optimizers (e.g., SGD~\citep{robbins1951stochastic}, AdamW~\citep{loshchilov2017decoupled}): an initially high learning rate facilitates rapid exploration and helps the model escape sharp, spurious local minima, while subsequent decay ensures stable convergence and suppresses gradient-induced oscillations \citep{you2019does}. To this end, various predefined schedules have been developed, such as Cosine Annealing \citep{loshchilov2016sgdr}, Linear Decay \citep{devlin2019bert}, and Piecewise Constant Decay \citep{he2016deep}.

However, Reinforcement Learning for LLMs introduces unique challenges that traditional schedulers fail to address. First, as we'll show in Section \ref{Sec:Method}, the ``optimal'' training duration in RL is often unpredictable. Second, conventional schedulers are ``blind'' to the internal states of the model and system-level discrepancies. Our work fills this gap by proposing a scheduler that is explicitly reactive to early-warning signals of numerical instability, moving beyond static, time-based decay patterns.

\section{Methodology} \label{Sec:Method}
\subsection{Why LR Scheduler}
We encounter training collapse when conducting math RL experiments on Qwen3 Base models \citep{yang2025qwen3}. To quantitatively diagnose the degree of training-inference mismatch and its causal link to optimization failure, we firstly introduce the Log Perplexity (\(\log \text{ppl}\)) of a trajectory \( \tau=(x,y)\) given the model weight \( \theta\)
\begin{equation}
    \log \text{ppl}(\tau, \pi_{\theta})= -\sum_{t=1}^{T}  \log \pi_{\theta}(y_t|x,y_{<t}).
\end{equation}
Then we define the mismatch indicator \(\log\text{ppl\_abs\_diff}\) :
\begin{equation}
    \log\text{ppl\_abs\_diff} = \frac{1}{N} \sum_{i=1}^{N}|\log\text{ppl}(\tau_i,\textcolor{blue}{\pi}_\theta)-\log\text{ppl}(\tau_i, \textcolor{red}{\mu}_\theta)|,
\end{equation}

where \(N\) is the batch size in each update step.

\begin{figure}[H]
    \centering
    \includegraphics[width=1\linewidth]{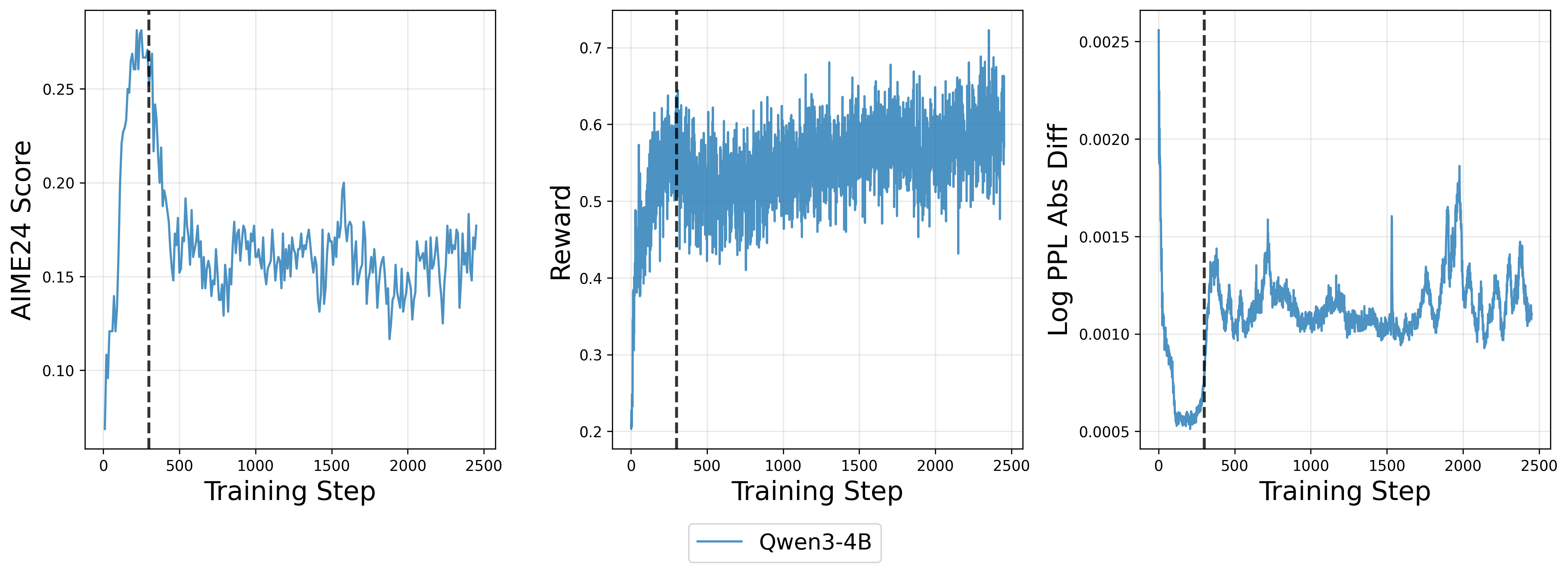}
    \caption{Results of Qwen3-4B-Base on filtered DAPO training dataset. The vertical dotted line indicates the point (300 step) at which the methods collapse.}
    \label{fig:baseline}
\end{figure}

\noindent We monitor $\log \text{ppl\_abs\_diff} $ during the training process and as Figure \ref{fig:baseline} shows, the onset of the distribution mismatch coincides with the collapse of validation accuracy(both around 300 step). This observation corroborates the patterns reported in previous studies \citep{yao2025your,liuspeed}. Following suggested remedies, we applied various IS patches. However, as Figure \ref{fig:importance sampling patches} shows, while these techniques successfully prolonged the training process, only token-level TIS prevents collapse; most of them eventually failed in this experiment with a more thorough collapse compared to baseline.

\begin{figure}
    \centering
    \includegraphics[width=1\linewidth]{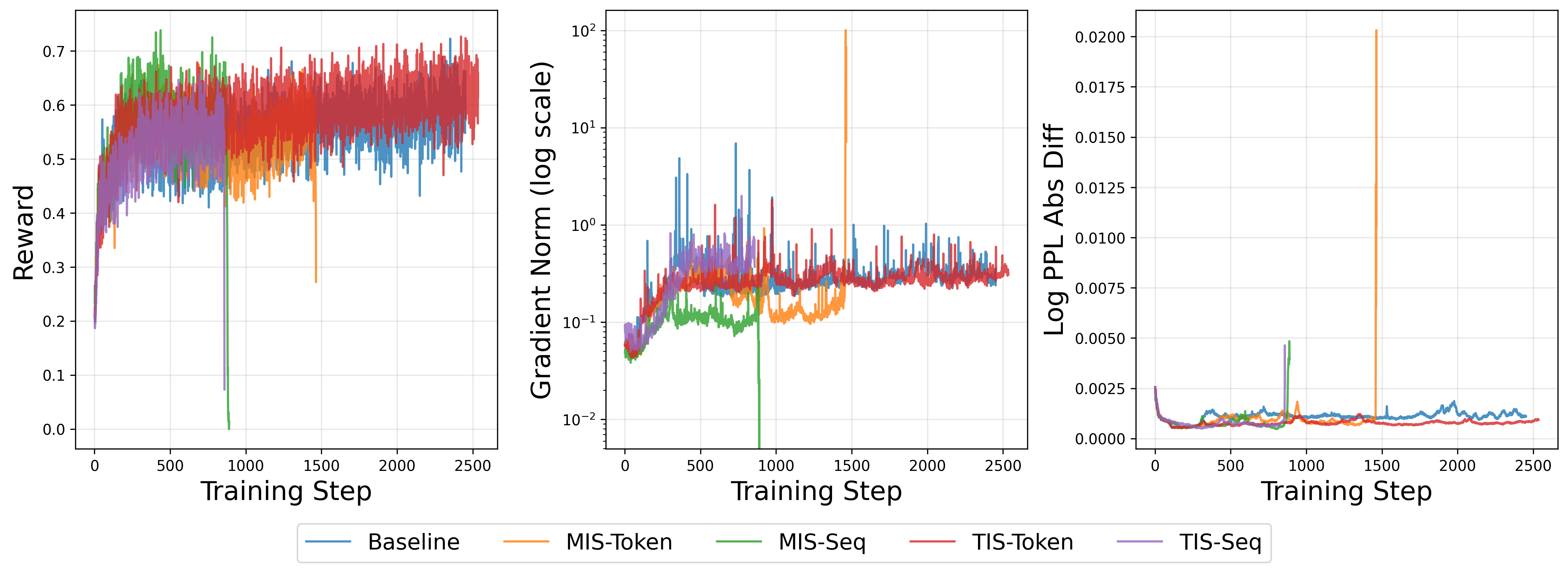}
    \caption{Training stability comparison with/without IS patches.}
    \label{fig:importance sampling patches}
\end{figure}

\noindent This suggests that conventional IS techniques still have significant limitations in this context.
To investigate whether this emerging challenge—training-inference mismatch in RL can be addressed by ``old wisdom'' from optimization theory, we analyze the smoothed gradient norm.

\begin{figure}[H]
    \centering
    \includegraphics[width=0.95\linewidth]{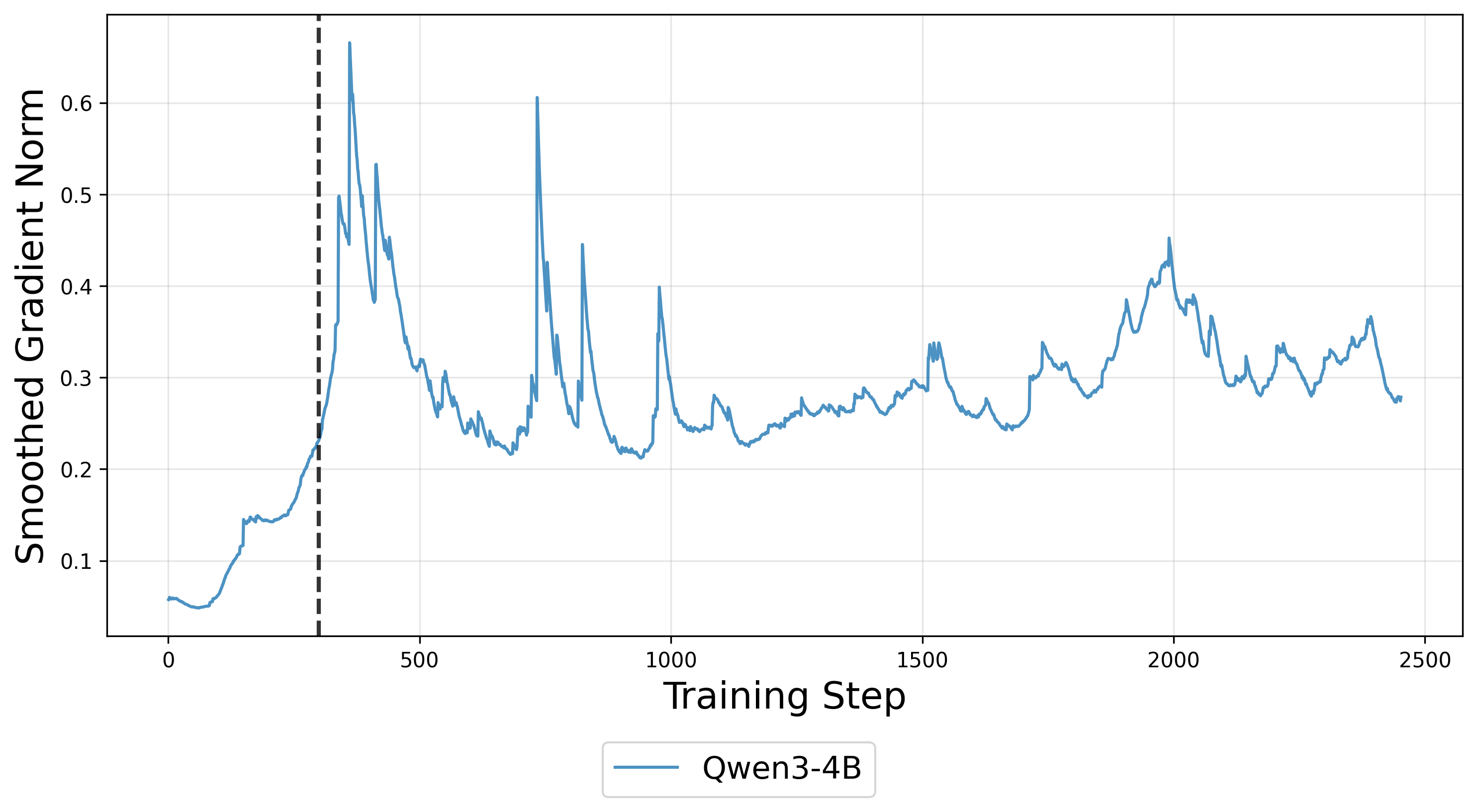}
    \caption{Smoothed gradient norm.  The vertical dotted line is at 300 step.}
    \label{fig:smooth-grad-norm}
\end{figure}

\noindent The gradient norm reflects a composite of signal and noise. As the model fits the training data, the true gradient signal is expected to diminish; but the gradient norm escalates in Figure \ref{fig:smooth-grad-norm}. This suggests that the gradient estimation becomes increasingly dominated by noise rather than signal. Simple theoretical analysis reveals that reducing learning rate $\eta$ can drastically shrinks the effect brought by gradient noise (including bias and variance, detailed in Appendix \ref{Appendix:Discussion}). Inspired by this, we reduced our constant learning rate from the default \texttt{1e-6} to \texttt{1e-7}, and the experiment result is presented in Figure \ref{fig:lr1e-7}.

\begin{figure}[H]
    \centering
    \includegraphics[width=1\linewidth]{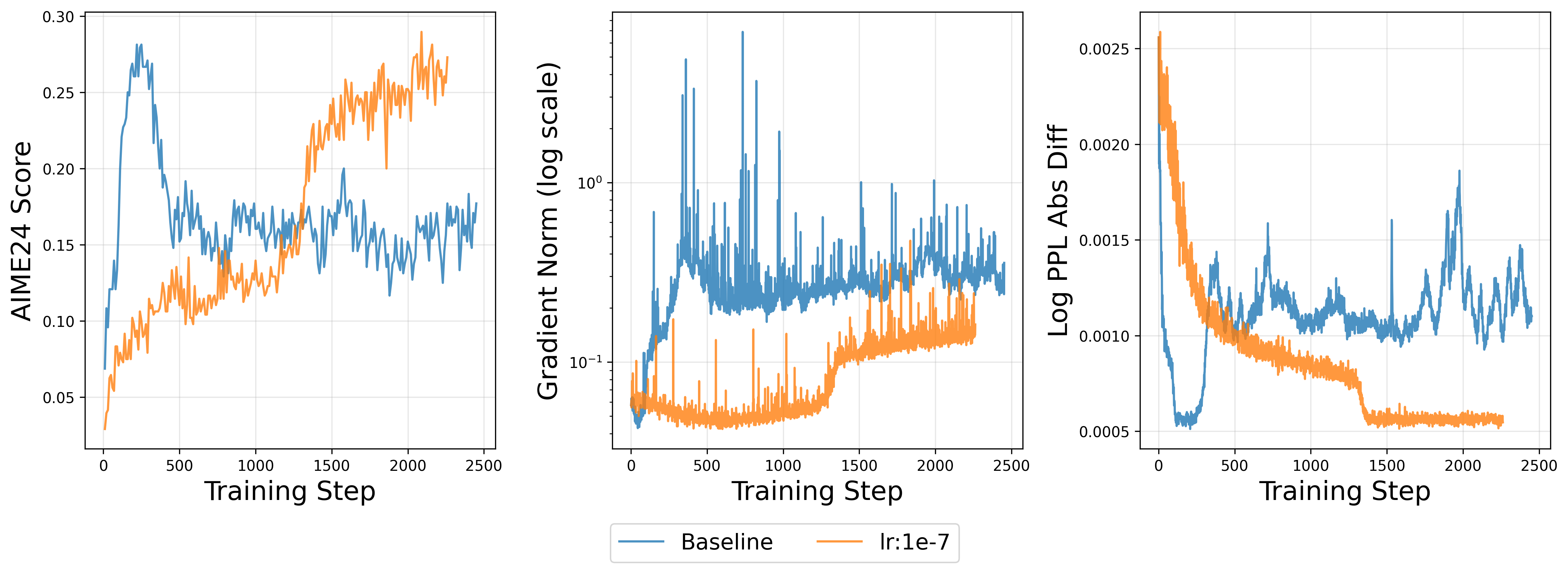}
    \caption{LR:1e-6 vs 1e-7. It’s worth noting that the blue line isn’t simply 10x slower than the green line. The green line keeps rapid and homogeneous growing and then rapidly collapses; the blue line, in contrast, has a sudden surge around 1.5k steps, which exhibits heterogeneity in the growing process.}
    \label{fig:lr1e-7}
\end{figure}

\noindent This phenomenon reveals some deeper facts. While one might expect such a discrepancy to stem solely from numerical precision limits (and thus behave as state-independent random noise), our experiments suggest that this mismatch can be effectively suppressed by shrinking the update size, which demonstrates its connection to model weight location and optimization dynamics. We hypothesize that in the later stages of training, malignant dynamics leads the model weights to regions with specific geometric characteristics(e.g, sharp region, higher curvature), which in turn amplifies numerical discrepancies. This interplay between numerical stability and loss landscape geometry remains an intriguing open question. 

Nonetheless, as globally lowering the learning rate inevitably compromises efficiency during the benign early stages, these observations provide a strong motivation for a dynamic LR scheduler.

\subsection{Traditional Scheduler is Not Feasible}

The application of traditional learning rate (LR) schedulers in RL raises a fundamental challenge: the absence of an informed ``stopping point'' or decay horizon. Predefined schedules lack the adaptability required to respond to real-time training dynamics. Specifically, a fixed schedule may decay the LR prematurely during the benign early stages, hindering rapid progress, or intervene too late—failing to arrest the optimization collapse once numerical instability takes hold. 

\begin{figure}[H]
    \centering
    \includegraphics[width=1\linewidth]{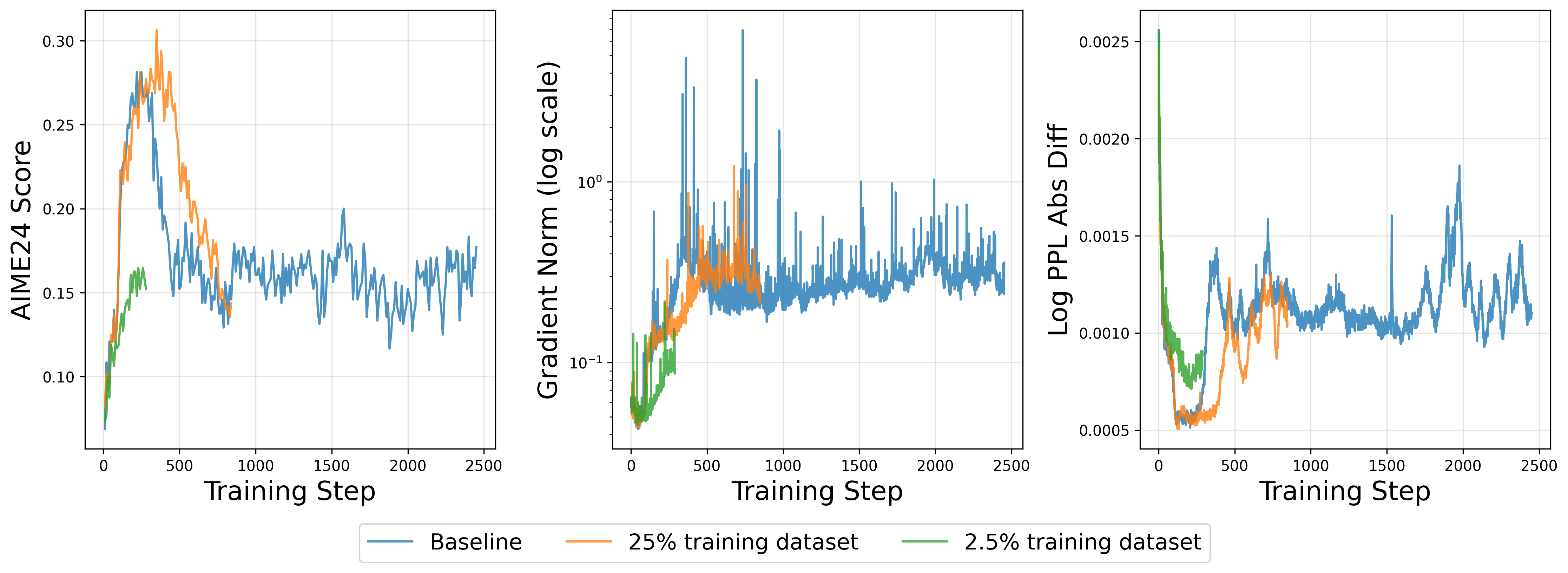}
    \caption{Experiment with partial training dataset. The collapse of orange line with 25\% dataset is even slower than using full dataset. The green line with 2.5\% dataset still grows for 300 steps.} 
    \label{fig:partial dataset}
\end{figure}

\noindent One might argue that the training duration is typically governed by the number of epochs, which could serve as a proxy for the Signal-to-Noise Ratio (SNR) of the gradients. From this perspective, identifying the appropriate epoch count for a specific class of task might suffice to justify traditional schedulers. However, in Figure \ref{fig:partial dataset}, our experiments using varying fractions of the complete dataset demonstrate that the dynamics of training and the onset of collapse are not linearly proportional to the size of the dataset. Even with a significant reduction in dataset size (orange line), the training dynamics and the eventual point of failure remain remarkably consistent. This suggests that RL training stability is roughly independent of data volume after reaching a ``saturation threshold''. Consequently, epoch count is an unreliable metric for scheduling, highlighting the need for a reactive mechanism triggered by internal signals rather than external time-based schedulers.

\subsection{Our Method: Decay by Response Length}

Our proposed LR scheduler adopts a reactive, heuristic-driven approach: it maintains the initial learning rate to maximize exploration until a signal of potential instability is detected. Upon triggering, the learning rate undergoes a length-decay, halving every \texttt{decay\_period} until it reaches a predefined floor (defaulting to 10\% of $\eta_0$). The operational logic is formalized in Algorithm \ref{alg:lrdecay}:

\begin{algorithm}[H] \label{alg:lrdecay}
\SetKwInOut{Input}{Input}
\SetKwInOut{Output}{Output}

\Input{Initial learning rate $\eta_0$, Final learning rate $\eta_{\infty}$, Decay period $T_{decay}$, Total training steps $T$}
\Output{Learning rate $\eta_t$ for each step $t$}

Initialize $\eta_t \leftarrow \eta_0$\;

\For{$t = 1$ \KwTo $T$}{   
    \If{$t \pmod{T_{decay}} == 0$}{
        $\eta_t \leftarrow \max(\frac{\eta_{t-1}}{2}, \eta_{\infty})$\;
    }
    
    Update model parameters using $\eta_t$\;
}
\caption{length-decay LR Scheduler}
\end{algorithm} 

\noindent A critical challenge lies in determining the optimal \texttt{decay\_period} pro-actively. To predict instability before its onset, we investigated various RL training metrics and identified the average response length as a reliable early-warning signal. Response length is an important metric and has been widely observed to correlate with training stability in RL \citep{yuan2025efficient,dai2025stable,singhal2023long,chai2024ma}. In particular, we observed a distinct pattern termed "response length surge," in which the average length can triple in just a few steps.

\begin{figure}[H]
    \centering
    \includegraphics[width=0.95\linewidth]{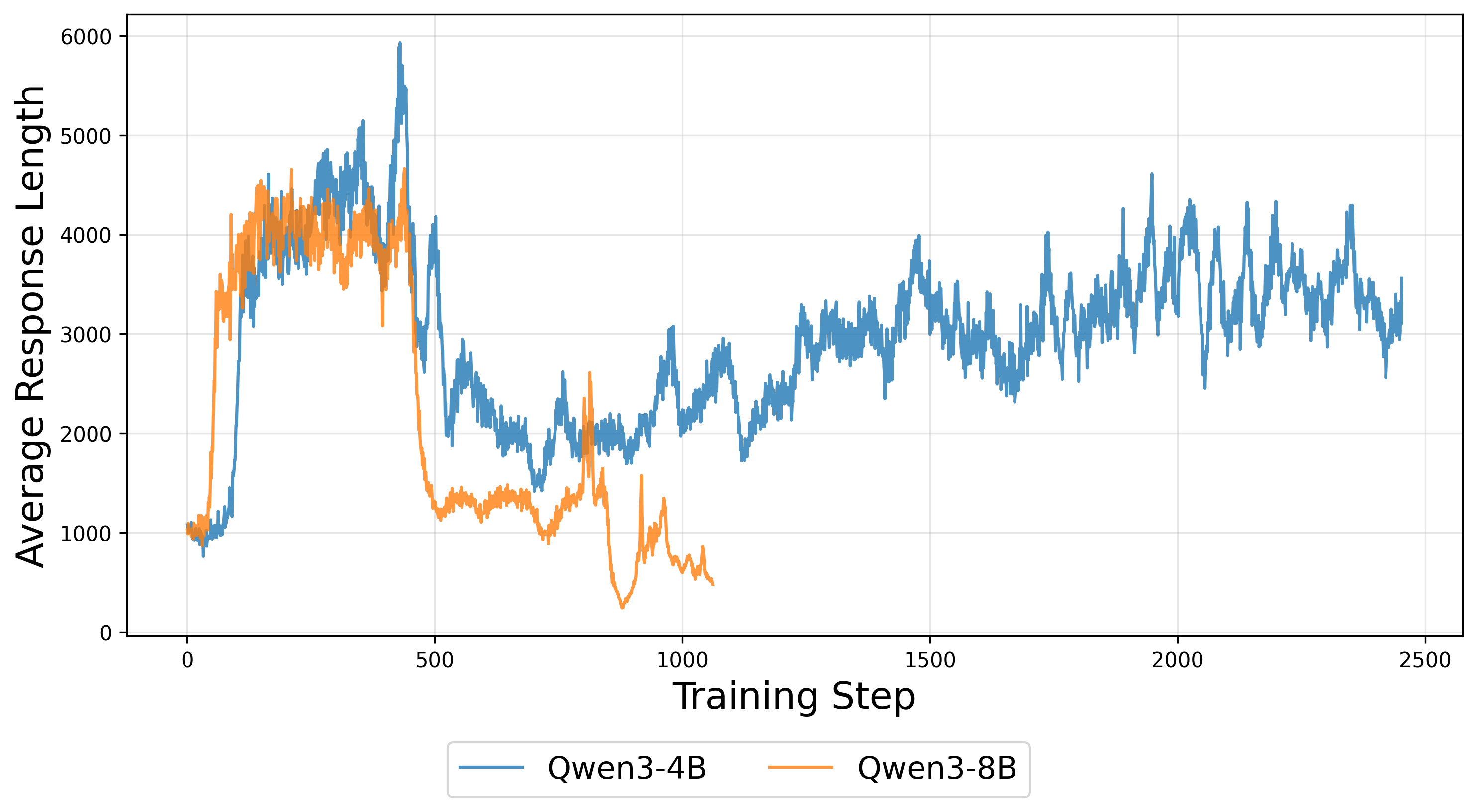}
    \caption{The average response length in Qwen3-4B-Base and Qwen3-8B-Base training. We can observe a clear surge around 100 step for both, from 1000 to 3000-4000.}
    \label{fig:Response length surge}
\end{figure}

\noindent This surge serves as an intuitive precursor to instability. In auto-regressive generation, while increased length may improve performance, it also inherently amplifies numerical discrepancies. To formalize why an expanding horizon $T$ could influence training dynamics, we derive an error bound on the gradient shift under mild regularity assumptions:

\begin{theorem}[informal] \label{thm:horizon_informal}
    Let \(\nabla_{\theta} \mathcal{J}(\theta)\) and \(\nabla_{\theta} \mathcal{J}_{actual}(\theta)\) be the ideal gradient for backpropagation and the real calculated gradient defined in \ref{eq:ideal graident} and \ref{eq:biased gradient}, respectively. Then, under mild assumptions, the following inequality holds:
\begin{equation*}
    \left \| \nabla_{\theta} \mathcal{J}_{actual}(\theta)-\nabla_{\theta} \mathcal{J}(\theta) \right \|_2 \leq C \cdot T^2.
\end{equation*}
    We provide detailed proof in Appendix \ref{Appendix:proof}.
\end{theorem}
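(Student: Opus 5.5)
The difference between the two gradients comes only from the sampling distribution, since the integrand $g(y)=\nabla_\theta\log\pi_\theta(y|x)R(x,y)$ is the same. So for each prompt $x$ I will write
\begin{equation*}
\nabla_\theta\mathcal{J}_{actual}(\theta)-\nabla_\theta\mathcal{J}(\theta)=\mathbb{E}_{x\sim p_x}\Big[\sum_y\big(\mu_\theta(y|x)-\pi_\theta(y|x)\big)\,g(y)\Big],
\end{equation*}
and bound its norm by $\sup_y\|g(y)\|_2\cdot 2\,\mathrm{TV}(\mu_\theta(\cdot|x),\pi_\theta(\cdot|x))$. The claimed $T^2$ rate should then factor as one power of $T$ from $\|g\|$ and one from the total variation distance.

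\textbf{Assumptions.} I will make the "mild assumptions" explicit:
\begin{enumerate}
\item rewards are bounded, $|R(x,y)|\le R_{\max}$;
\item per-token score functions are bounded, $\|\nabla_\theta\log\pi_\theta(y_t|x,y_{<t})\|_2\le G$ for all prefixes;
\item the per-token mismatch is uniformly small, $\mathrm{TV}\big(\mu_\theta(\cdot|x,y_{<t}),\pi_\theta(\cdot|x,y_{<t})\big)\le\epsilon$ for every prefix, which models the bounded numerical discrepancy of a single forward step.
\end{enumerate}
Under the second assumption, the autoregressive factorization $\log\pi_\theta(y|x)=\sum_{t=1}^T\log\pi_\theta(y_t|x,y_{<t})$ immediately gives $\|g(y)\|_2\le T\,G\,R_{\max}$.

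\textbf{Bounding the TV distance.} The key step is to show that the sequence-level TV distance grows at most linearly in $T$. I will use the standard hybrid (telescoping) argument. For $k=0,\dots,T$, define the distribution $P_k$ that samples the first $k$ tokens from $\mu_\theta$ and the remaining tokens from $\pi_\theta$. Then $P_0=\pi_\theta$ and $P_T=\mu_\theta$. Adjacent hybrids $P_{k-1}$ and $P_k$ differ only in the conditional at position $k$, given a common prefix distribution. Hence their TV distance is the expectation over prefixes of the per-token TV, which is at most $\epsilon$. By the triangle inequality, $\mathrm{TV}(\mu_\theta,\pi_\theta)\le T\epsilon$. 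Alternatively, I could go through KL divergence, using the chain rule and Pinsker's inequality, but that yields only $\sqrt{T}$ with KL assumptions. I prefer the TV route because it matches the stated $T^2$ exactly.

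Combining the two bounds gives $\|\nabla\mathcal{J}_{actual}-\nabla\mathcal{J}\|_2\le 2\,G\,R_{\max}\,\epsilon\,T^2$, so the constant is $C=2GR_{\max}\epsilon$. For variable-length responses, I will take $T$ as the maximum horizon, padding with an EOS absorbing token whose conditionals agree under $\pi_\theta$ and $\mu_\theta$.

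\textbf{Main obstacle.} I expect the hard part to be justifying the uniform per-token bound $\epsilon$. Numerical discrepancies may depend on the prefix and on $\theta$, which is exactly the paper's point. I will therefore state $\epsilon$ as a supremum over prefixes reachable under $\mu_\theta$, and remark that the bound is loose but captures the quadratic horizon dependence.
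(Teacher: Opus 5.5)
Your proof is correct, but it takes a different route from the paper's. You stay at the sequence level. Since the integrand $g(y)=R(x,y)\nabla_\theta\log\pi_\theta(y|x)$ is the same under both samplers, the gap is at most $\sup_y\|g(y)\|_2\,\|\mu_\theta(\cdot|x)-\pi_\theta(\cdot|x)\|_1$. One power of $T$ comes from the score norm (a sum of $T$ token scores), and the other from your hybrid argument bounding the sequence-level TV by $T\epsilon$.

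The paper instead rewrites both gradients as sums over time steps of token-level terms weighted by the state occupancies $d_{\pi,t}$ and $d_{\mu,t}$, and pulls out $B$. It then splits $\sum_{s,a}|d_{\pi,t}(s)\pi_\theta(a|s)-d_{\mu,t}(s)\mu_\theta(a|s)|$ by the triangle inequality into a local term at most $2\Delta_{max}$ and an occupancy-drift term $\delta_t\le 2t\Delta_{max}$. The drift bound comes from the one-step recursion $\delta_t\le\delta_{t-1}+2\Delta_{max}$. There the two powers of $T$ come from the sum over $t$ and from the linear drift. The resulting constant $2B\Delta_{max}$ coincides with your $2GR_{\max}\epsilon$ when $R_{\max}=1$.

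The paper's decomposition supports the interpretation it gives in the main text: per-step mismatch plus accumulated visitation drift, in the spirit of the TRPO bound. Yours is shorter, and it avoids a subtlety in the token-level rewriting. There, the coefficient multiplying $\nabla_\theta\log\pi_\theta(a|s)$ is really the expected final reward given $(s,a)$. That quantity depends on whether the continuation is drawn from $\pi_\theta$ or $\mu_\theta$, yet the paper uses a single $R(s,a)$ for both gradients. The omitted correction is also $O(T^2)$, so the paper's conclusion survives with a larger constant. Because your $g$ is literally the same function under both distributions, the issue never arises.

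As an aside, the KL/Pinsker route you set aside is not weaker: under a per-token KL assumption it would give a sharper $T^{3/2}$ rate.
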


The constant \(C\) represents the degree of numerical mismatch of a single-token and the amplitude of per-token score function, which are independent of sequence length. This quadratic scaling with horizon \(T\) comes from two parts: one is that the state visitation distribution at time \(t\) divergence accumulates linearly with time; one is the summation of per-step distribution divergences. This bound matches the classical bound on the relationship between effective horizon and penalty brought by using off-policy surrogate in TRPO algorithm \citep{schulman2015trust}. It reveals that the surge in response length alone may significantly amplify discrepancies brought by off-policy(including distribution mismatch).

However, this stability is not an instant effect: the numerical mismatch itself is small in magnitude, and it takes time for model weights to come into the "bad region". Based on our empirical observations, we propose to set the \texttt{decay\_period} as $1.8 \times$ \textbf{time of the response length surge} to balance the learning speed and training stability. Detailed experimental evidence supporting these heuristics will be presented in the subsequent section.

\section{Experiments} \label{Sec:experiment}

We evaluate our proposed method on Qwen3 Base models \citep{yang2025qwen3} with filtered DAPO dataset \citep{yu2025dapo} (around 13k samples) on VeRL framework \citep{sheng2025hybridflow} for RL training. Initial learning rate is \texttt{1e-6} and train batch size is \(64\). Comprehensive hyperparameter configurations are detailed in Appendix \ref{Appendix:hyperparameter}.

\subsection{Our Scheduler Stabilizes Training and Reduces Mismatch}

To assess the efficacy of the adaptive length-decay scheduler, we conduct a comparative analysis of training rewards and validation accuracy between the baseline and our method. Furthermore, we monitor the trajectory of the gradient norm and the $\log\text{ppl\_abs\_diff}$ indicator to evaluate training stability.

\begin{figure}[H]
    \centering
    \includegraphics[width=1\linewidth]{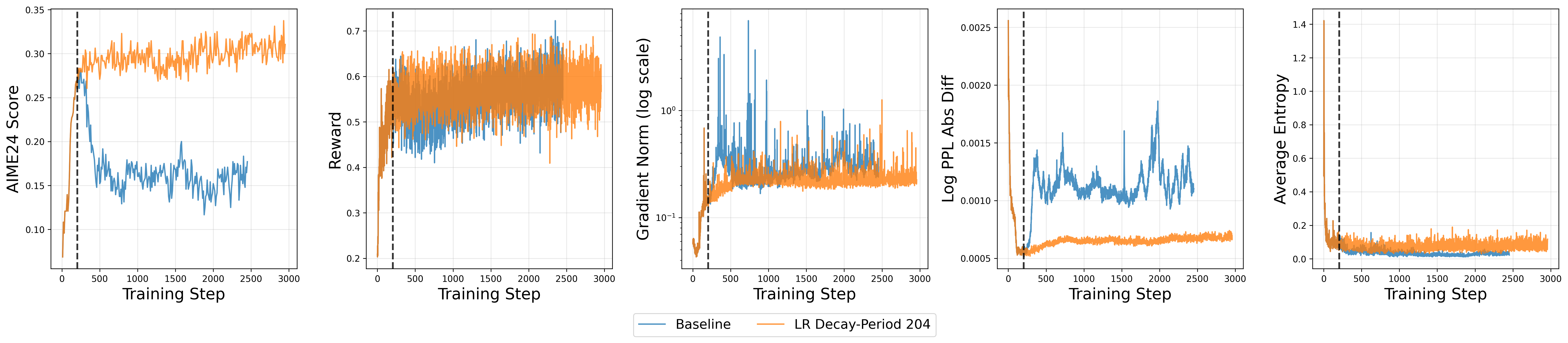}
    \caption{Comparison of applying length-decay LR Scheduler or not on Qwen3-4B-Base. The vertical dotted line is at 204 step, indicating the starting point of decaying LR.}
    \label{fig:4B-lrdecay-single}
\end{figure}
\begin{figure}[H]
    \centering
    \includegraphics[width=1\linewidth]{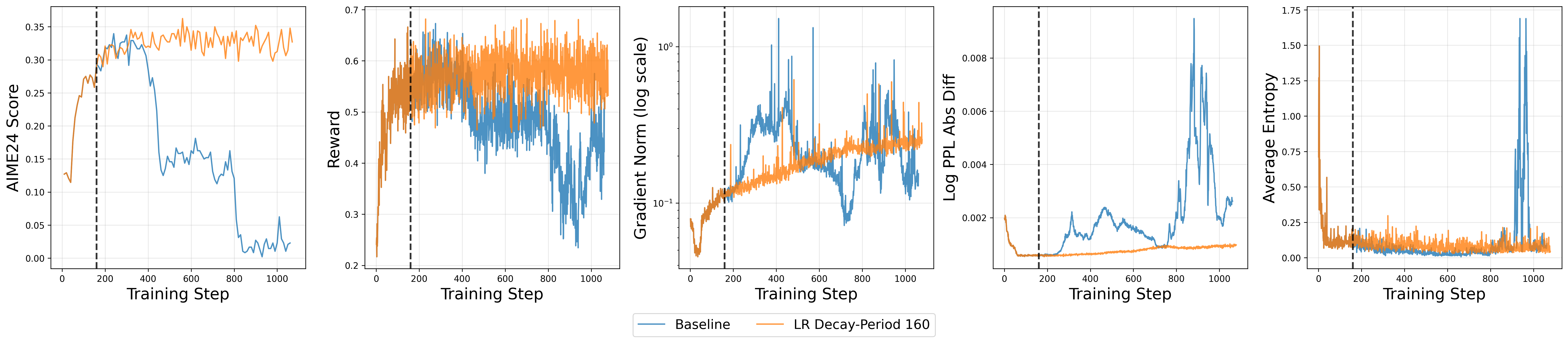}
    \caption{Comparison of applying length-decay LR Scheduler or not on Qwen3-8B-Base. The vertical dotted line is at 160 step, indicating the starting point of decaying LR.}
    \label{fig:8B-lrdecay-single}
\end{figure}

As Figure \ref{fig:4B-lrdecay-single} and \ref{fig:8B-lrdecay-single} shows, applying our customized length-decay LR scheduler clearly improves peak validation performance and stabilizes training. Our method maintains a stable gradient norm and constrains the mismatch indicator \(\log\text{ppl\_abs\_diff}\) within a safe numerical regime, effectively neutralizing engine-level discrepancies. Furthermore, the reduced learning rate helps preserve higher policy entropy, preventing premature mode-collapse and maintaining the model's diversity and adaptability for subsequent tasks. \citep{cui2025entropy,yue2025does}

\subsection{Criteria of Choosing Decay Period}

As discussed in Section \ref{Sec:Method}, the average response length exhibits a "surge" pattern, tripling within a remarkably narrow window. Since the error of the policy gradient estimate scales with sequence length, this surge injects significant noise into the optimization dynamics, necessitating a lower learning rate to maintain stability. Hence, we investigate how we should choose the key parameter \texttt{decay\_period} in our scheduler based on the time point of length surge.

For Qwen3-4B-Base model, the response length surge occurs near \textbf{110} step. We evaluated 3 different decay\_period : 125 (immediately following the surge), 250 (double the initial surge interval), and 204 (coinciding with the first epoch). As shown in Figure \ref{fig:4B-lrdecay-multi}, a \texttt{decay\_period} of 125 and 204 both successfully stabilize training, with 204 performing slightly better. In contrast, a period of 250 merely extends the stable window but fails to prevent eventual collapse.

\begin{figure}[H]
    \centering
    \includegraphics[width=1\linewidth]{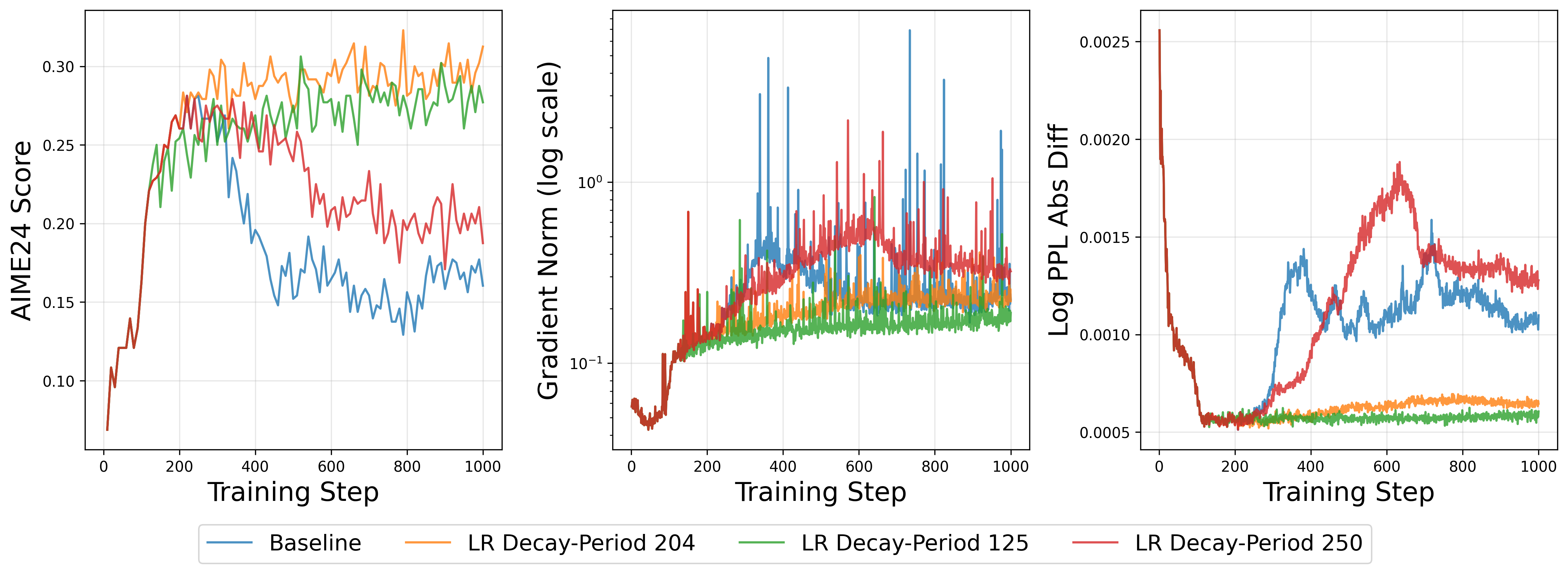}
    \caption{Comparison of using different \texttt{decay\_period} on Qwen3-4B-Base.}
    \label{fig:4B-lrdecay-multi}
\end{figure}

To decouple the optimal \texttt{decay\_period} from the epoch count, we replicated the experiment on Qwen3-8B-Base, where the surge occurs earlier, at step \textbf{90}. We tested periods of 100, 160, and 204. Notably, the epoch-aligned period (204) led to a decline in AIME24 accuracy (Figure \ref{fig:8B-lrdecay-multi}), whereas the optimal performance was achieved at 160. This value scales linearly with the surge onset by a factor of approximately $1.8$. Across both models, early decay did not significantly degrade performance, demonstrating that our length-surge-based heuristic is a more robust anchoring metric than traditional epoch counts.
\begin{figure}[H]
    \centering
    \includegraphics[width=1\linewidth]{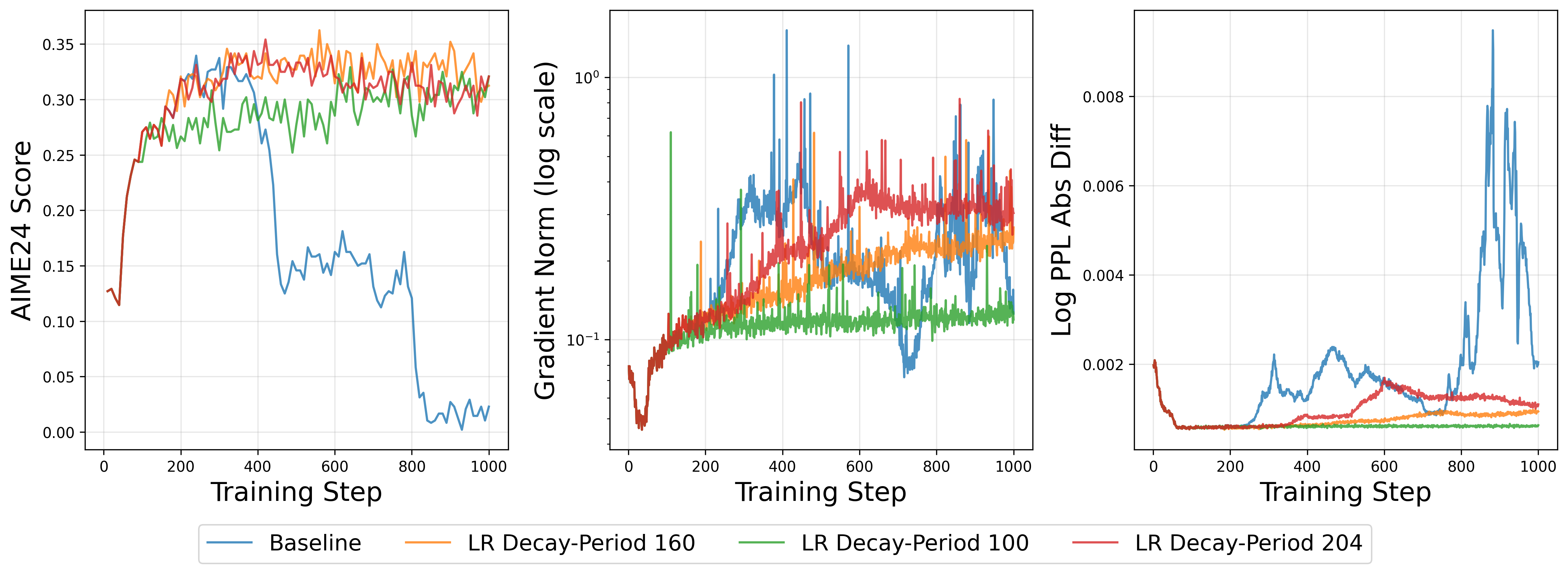}
    \caption{Comparison of using different \texttt{decay\_period} on Qwen3-8B-Base.}
    \label{fig:8B-lrdecay-multi}
\end{figure}

\subsection{Ablation Experiments on Importance Sampling}

Is a LR scheduler necessary if we already use techniques like Importance Sampling(IS)? We have partially answered this question in Section \ref{Sec:Background}: IS doesn't always work. We answer another part of this question here: LR scheduler is able to fix it. We pick token-level MIS and token-level TIS to investigate the influence of LR scheduler when IS patch fails or manages to prevent collapse.
\begin{figure}[H]
    \centering
    \includegraphics[width=1\linewidth]{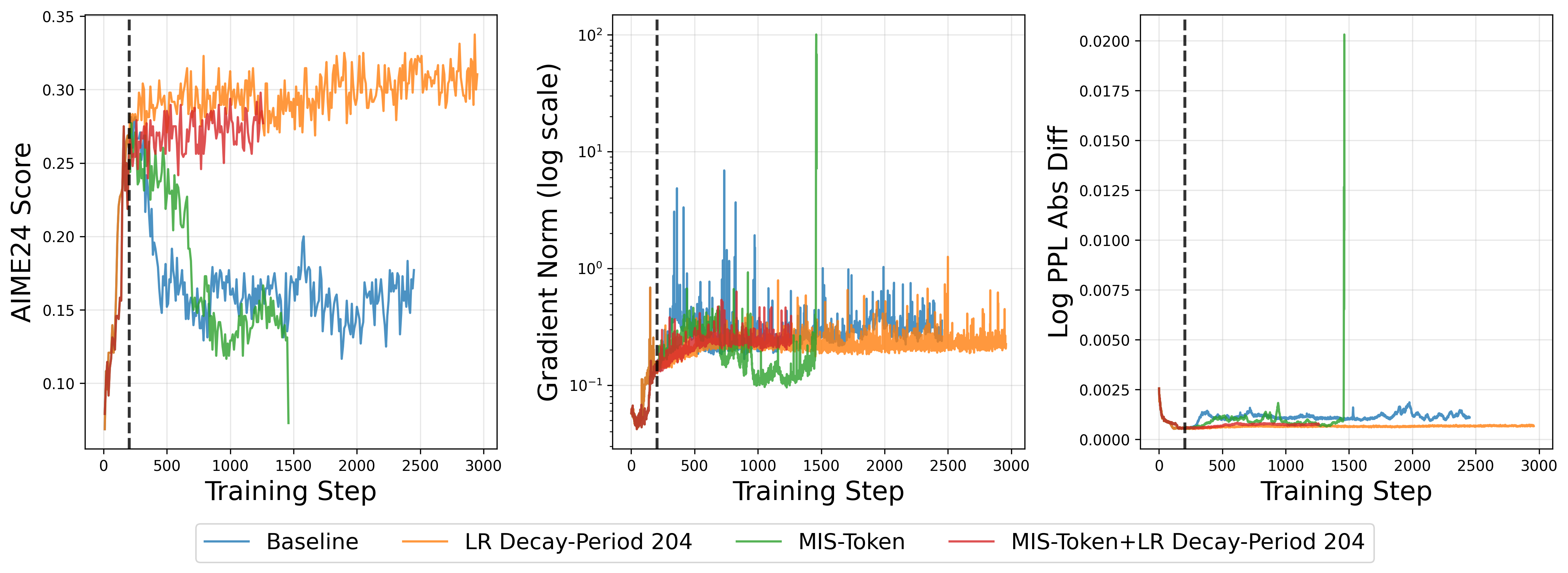}
    \caption{Ablation experiment on Token-level MIS on Qwen3-4B-Base.}
    \label{fig:Ablation:MIS-Token}
\end{figure}

\begin{figure}[H]
    \centering
    \includegraphics[width=1\linewidth]{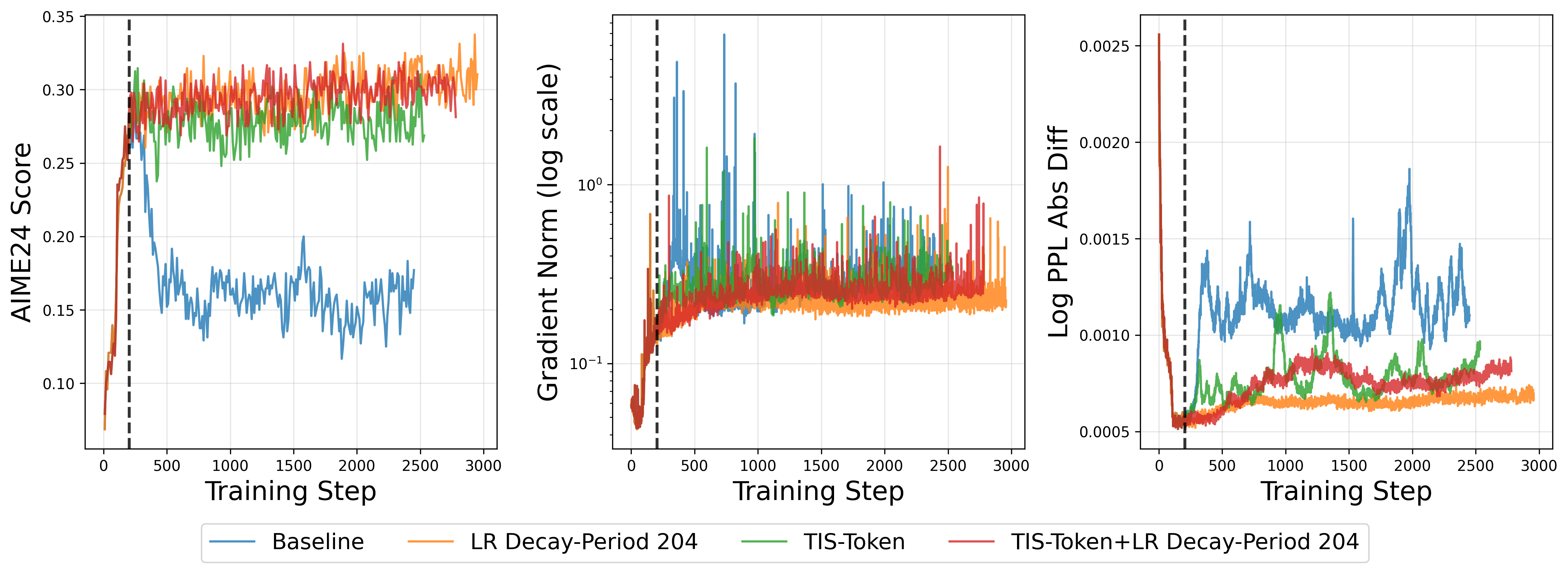}
    \caption{Ablation experiment on Token-level TIS on Qwen3-4B-Base.}
    \label{fig:Ablation:TIS-Token}
\end{figure}
As Figure \ref{fig:Ablation:MIS-Token} shows, MIS doesn't solve collapse problem, while our length-decay LR scheduler effectively stabilizes training regardless of whether MIS was used. In TIS experiment(Figure \ref{fig:Ablation:TIS-Token}), while TIS patch manages to prevent collapse, the addition of the LR scheduler further elevates validation accuracy and significantly dampens numerical spikes in the mismatch indicator.

These findings suggest that importance sampling and learning rate scheduling operate at different levels of the optimization process. While IS attempts to correct per-sample gradient bias, our scheduler addresses the fundamental dynamic instability by modulating the update magnitude. Thus, the adaptive LR scheduler serves as a robust, complementary solution that enhances stability even in the presence of existing algorithmic patches.

\subsection{Ablations on FP16 Precision}

A recent study \citep{qi2025defeating} suggests that substituting the default BF16 data type with FP16 may enhance the stability of RL training by providing finer numerical resolution in certain ranges. To evaluate this claim, we investigated whether switching to FP16 inherently resolves the training-inference mismatch and assessed the efficacy of our scheduler under this alternative precision setting.

\begin{figure}[ht]
    \centering
    \includegraphics[width=1\linewidth]{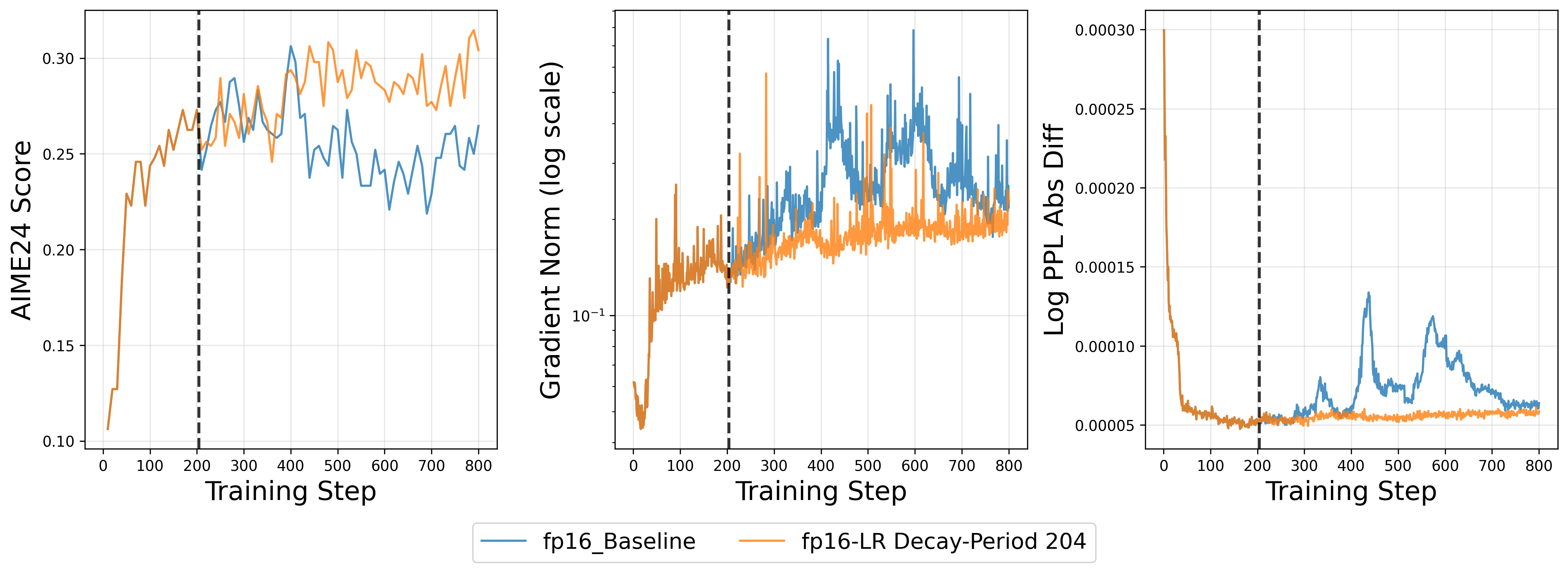}
    \caption{Experiment on fp16.}
    \label{fig:fp16}
\end{figure}

As Figure \ref{fig:fp16} shows, fp16 exhibits similar training collapse as bf16 in previous experiment. Notably, our length-triggered LR scheduler successfully stabilizes the training process using the same \texttt{decay\_period} as in previous experiments. These results reinforce our central argument: the training-inference mismatch and the resulting instability are not merely precision problems. Instead, they represent a dynamic optimization challenge that transcends low-level hardware formats, necessitating an optimization-level intervention. 

\section{Conclusion}
\label{Sec:conclusion}

In this work, we investigated the persistent instability of Reinforcement Learning for LLMs, with a specific focus on the phenomenon of training-inference mismatch. Our findings challenge the conventional characterization of this mismatch as a static numerical artifact, revealing it instead as a dynamic optimization failure deeply coupled with gradient noise and the model's evolution along its trajectory. We introduced a reactive LR scheduler triggered by the surge in average response length—a metric we identify as a robust early-warning signal for impending instability.

Empirical evaluations across model scales demonstrate that our method consistently stabilizes the training process, effectively suppresses engine-level distribution mismatch, and achieves superior peak performance compared to standard importance sampling and constant learning rate strategies. We believe these insights provide a crucial step toward understanding the mechanical origins of training-inference mismatch and offer a practical recipe for stable RL orchestration at scale.

\bibliographystyle{plainnat}
\bibliography{ref}

\newpage
\appendix
\section*{Appendix}

\section{Brief discussion on LR decay and gradient noise} \label{Appendix:Discussion}
Assuming that the objective function $\mathcal{J}(\theta)$ is $L$-smooth, we have the following quadratic lower bound:
\begin{equation}
\mathcal{J}(\theta_{k+1}) \geq \mathcal{J}(\theta_k) + \langle \nabla \mathcal{J}(\theta_k), \theta_{k+1} - \theta_k \rangle - \frac{L}{2} \|\theta_{k+1} - \theta_k\|^2
\end{equation}

Consider the stochastic gradient update rule with step size $\eta$:
\begin{equation}
\theta_{k+1} = \theta_k + \eta \hat{g}_k
\end{equation}
where $\hat{g}_k$ is a biased estimate of the true gradient $\nabla \mathcal{J}(\theta_k)$, decomposed as:
\begin{equation}
\hat{g}_k = \nabla \mathcal{J} + \mathbf{Bias} + \xi, \quad \mathbb{E}[\xi] = 0, \quad \mathbb{E}[\|\xi\|^2] = \mathbf{Var}
\end{equation}

Substituting the update rule into the smoothness inequality:
\begin{equation}
\mathcal{J}(\theta_{k+1}) - \mathcal{J}(\theta_k) \geq \eta \langle \nabla \mathcal{J}, \hat{g}_k \rangle - \frac{L\eta^2}{2} \|\hat{g}_k\|^2
\end{equation}

Taking the expectation $\mathbb{E}[\cdot]$ conditioned on $\theta_k$:
\begin{itemize}
    \item \textbf{Linear Term:}
    \begin{equation*}
    \mathbb{E}[\eta \langle \nabla \mathcal{J}, \nabla \mathcal{J} + \mathbf{Bias} + \xi \rangle] = \eta \|\nabla \mathcal{J}\|^2 + \eta \langle \nabla \mathcal{J}, \mathbf{Bias} \rangle
    \end{equation*}
    \item \textbf{Quadratic Term:}
    Using $\mathbb{E}[\|\mathbf{X}\|^2] = \|\mathbb{E}[\mathbf{X}]\|^2 + \text{Var}(\mathbf{X})$:
    \begin{equation*}
    \mathbb{E}[\|\hat{g}_k\|^2] = \|\nabla \mathcal{J} + \mathbf{Bias}\|^2 + \mathbf{Var} = \|\nabla \mathcal{J}\|^2 + 2\langle \nabla \mathcal{J}, \mathbf{Bias} \rangle + \|\mathbf{Bias}\|^2 + \mathbf{Var}
    \end{equation*}
\end{itemize}

Combining these to get the main inequality:
\begin{equation} \label{eq:noisy gradient}
\mathbb{E}[\mathcal{J}(\theta_{k+1})] - \mathcal{J}(\theta_k) \geq 
\underbrace{\eta \left( 1 - \frac{L\eta}{2} \right) \|\nabla \mathcal{J}\|^2}_{\text{Term A: True Progress}} + 
\underbrace{\eta(1 - L\eta) \langle \nabla \mathcal{J}, \mathbf{Bias} \rangle}_{\text{Term B: Bias Effect}} - 
\underbrace{\frac{L\eta^2}{2} \left[ \mathbf{Var} + \|\mathbf{Bias}\|^2 \right]}_{\text{Term C: Noise Penalty}}
\end{equation}

The right side of inequality comprises of three terms. For small $\eta$, term A and term B scales roughly linearly with $\eta$ (since $\eta \gg \eta^2$ when $\eta$ is small). Hence, reducing \( \eta \) indeed slows down your true progress and dampens bias effect(which is unknown to be positive or negative); however, it reduces term C quadratically, faster than it slows down the progress, which means decaying learning rate would be helpful in resisting influence brought by gradient noise(including bias and variance).

\section{Proof of Theorem 1} \label{Appendix:proof}

\begin{theorem} \label{thm:horizon}
    Let \(\nabla_{\theta} \mathcal{J}(\theta)\) and \(\nabla_{\theta} \mathcal{J}_{actual}(\theta)\) be the ideal gradient for backpropagation and the real calculated gradient defined in \ref{eq:ideal graident} and \ref{eq:biased gradient}, respectively. Then, let reward \(R(\tau)\) only takes value in \(0\) and \(1\), under assumptions \ref{ass: Local Numerical Drift} and \ref{ass: Bounded Score Function}, the following inequality holds:

\begin{equation*}
    \left \| \nabla_{\theta} \mathcal{J}_{actual}(\theta)-\nabla_{\theta} \mathcal{J}(\theta) \right \|_2 \leq C \cdot T^2 
\end{equation*}
    Where \(C=2B \Delta_{max}\).
\end{theorem}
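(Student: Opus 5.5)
The plan is to write both gradients as expectations of the same integrand $f(y) := \nabla_\theta \log \pi_\theta(y|x)\, R(x,y)$ under two different sequence distributions, and then bound their difference by total variation times a sup-norm. I expect the assumptions to read as follows. Assumption \ref{ass: Local Numerical Drift} should give, at every prefix $s_t=(x,y_{<t})$, a per-token total variation bound $\mathrm{TV}\big(\pi_\theta(\cdot|s_t),\mu_\theta(\cdot|s_t)\big)\le \Delta_{max}$. Assumption \ref{ass: Bounded Score Function} should give a per-token score bound $\|\nabla_\theta \log\pi_\theta(y_t|s_t)\|_2\le B$.

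First, since $R\in\{0,1\}$ and the sequence score decomposes as $\sum_{t=1}^T \nabla_\theta\log\pi_\theta(y_t|s_t)$, I will use $\|f(y)\|_2\le TB$ uniformly. Fixing $x$, I write
\begin{equation*}
\nabla_\theta \mathcal{J}_{actual}-\nabla_\theta\mathcal{J} = \mathbb{E}_{x}\sum_y \big(\mu_\theta(y|x)-\pi_\theta(y|x)\big) f(y).
\end{equation*}
Taking norms gives the bound $TB\sum_y|\mu_\theta(y|x)-\pi_\theta(y|x)| = 2TB\,\mathrm{TV}(\mu_\theta(\cdot|x),\pi_\theta(\cdot|x))$.

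Second, I bound the sequence-level total variation by the standard telescoping (hybrid) argument. I introduce the hybrid distributions $P_k$, which sample the first $k$ tokens from $\mu_\theta$ and the remaining tokens from $\pi_\theta$. Then $\mathrm{TV}(P_k,P_{k-1})=\mathbb{E}_{s_k\sim\mu}\,\mathrm{TV}(\mu(\cdot|s_k),\pi(\cdot|s_k))\le\Delta_{max}$. Summing over $k$ yields $\mathrm{TV}\le T\Delta_{max}$, so the total is $2B\Delta_{max}T^2$, which is $C T^2$ with $C=2B\Delta_{max}$.

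To match the paper's narrative, I can alternatively split the argument per timestep. Write the difference as $\sum_t \mathbb{E}[\nabla\log\pi(y_t|s_t)R]$ under the two distributions. At step $t$, the prefix-distribution divergence accumulates linearly, as at most $t\Delta_{max}$, which gives $\sum_t 2Bt\Delta_{max}\le B\Delta_{max}T(T+1)\le 2B\Delta_{max}T^2$. There is a subtlety here: $R$ depends on the whole sequence, so each term really involves the full-sequence distribution. The cleaner route is therefore the first one.

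The main obstacle is pinning down exactly what the assumptions state. If Assumption \ref{ass: Local Numerical Drift} is phrased as a log-ratio bound $|\log\pi-\log\mu|\le\Delta_{max}$ instead of a total variation bound, I would convert it via $\mathrm{TV}\le \tfrac12\sum|\pi-\mu|\le \tfrac12(e^{\Delta_{max}}-1)$, or via Pinsker's inequality. That conversion adjusts constants, and I would absorb them into $\Delta_{max}$. If variable-length responses are involved, I would pad them to horizon $T$ with an absorbing EOS token so that the bound still applies.
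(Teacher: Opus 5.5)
Your sequence-level argument is correct and gives exactly $C=2B\Delta_{max}$, but it is organized differently from the paper's proof.

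\textbf{Your route.} You get one factor of $T$ from the sup-norm of the whole-sequence score, $\|R\,\nabla_\theta\log\pi_\theta(y|x)\|_2\le TB$. You get the other from the hybrid bound $\mathrm{TV}(\mu_\theta(\cdot|x),\pi_\theta(\cdot|x))\le T\Delta_{max}$.

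\textbf{The paper's route.} The paper follows essentially your second, per-timestep sketch. It rewrites both gradients through state occupancies and bounds the step-$t$ term by $B\sum_{s,a}|d_{\pi,t}(s)\pi_\theta(a|s)-d_{\mu,t}(s)\mu_\theta(a|s)|\le B(2\Delta_{max}+\delta_t)$. It then proves $\delta_t=\|d_{\pi,t}-d_{\mu,t}\|_1\le 2t\Delta_{max}$ by a one-step recursion, which is the same linear accumulation as your telescoping hybrids. Summing over $t=0,\dots,T-1$ gives $B\Delta_{max}T(T+1)\le 2B\Delta_{max}T^2$. Here the two factors of $T$ are the number of steps and the linear drift.

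\textbf{The subtlety you flagged.} It is exactly the step the paper glosses over. The paper uses one and the same $R(s,a)$ in the $\pi$-term and the $\mu$-term. With a sequence-level reward, however, the per-step weight is $Q^{\pi}(s_t,a_t)=\mathbb{E}_{\pi}[R\mid s_t,a_t]$ in one term and $Q^{\mu}(s_t,a_t)$ in the other.

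\textbf{What each buys.} Your route never needs a token-level reward, so it is the more self-contained proof. The per-step route gives the TRPO-style visitation-drift picture the paper uses to motivate the length heuristic. It can be made rigorous by adding the term $\mathbb{E}_{(s_t,a_t)\sim d_{\mu,t}\mu_\theta}[(Q^{\pi}-Q^{\mu})\nabla_\theta\log\pi_\theta(a_t|s_t)]$ and bounding it with $|Q^{\pi}-Q^{\mu}|\le (T-t-1)\Delta_{max}$. The total is then $B\Delta_{max}(\tfrac{3}{2}T^2+\tfrac{1}{2}T)\le 2B\Delta_{max}T^2$.
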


\begin{assumption}[Local Numerical Drift] \label{ass: Local Numerical Drift}
   We assume the numerical discrepancy of \textbf{single-token} between engines results in a bounded Total Variation (TV) distance between their local distributions: 
   \begin{align*}
       \sup_s D_{TV}(\textcolor{blue}{\pi}_\theta(\cdot|s) \| \textcolor{red}{\mu}_\theta(\cdot|s)) \le \Delta_{max}
   \end{align*}
\end{assumption}

\begin{assumption}[Bounded Score Function] \label{ass: Bounded Score Function}
The norm of the (token-level) score function is bounded:
    \begin{align*}
        \sup_{a,s}\|\nabla_\theta \log \textcolor{blue}{\pi}_\theta(a|s)\|_2 \le B
    \end{align*}
\end{assumption}

\noindent To prove the theorem, we first introduce two lemmas:

\begin{lemma} [Conversion to token-level objective]\label{lemma: Conversion to token-level objective}

For any policy \(\textcolor{blue}{\pi}\) and \(\textcolor{red}{\mu}\), we have the following equality:
\begin{equation}
    \mathbb{E}_{x \sim \mathcal{D}, y \sim \textcolor{red}{\mu}(\cdot|x)} \left[ R(x, y) \nabla_\theta \log \textcolor{blue}{\pi}_\theta(y|x) \right] = \mathbb{E}_{s \sim d_{\textcolor{red}{\mu}}} \mathbb{E}_{a \sim \textcolor{red}{\mu}(\cdot|s)} \left[ R(s, a) \cdot \nabla_\theta \log \textcolor{blue}{\pi}_\theta(a|s) \right] 
\end{equation}

Where \(x\) is prompt, \(y\) is (sequence-level) response, and \(d_\pi(s) := \mathbb{E}_{x \sim \mathcal{D}, y' \sim \pi(\cdot|x)} \left[ \sum_{t'=0}^{|y'|-1} \mathbb{I} \{ (x, y'_{<t'}) = s \} \right] = P(x) \cdot \prod_{k=0}^{t-1} \pi(y_k | x, y_{<k})\) is state occupancy measure.
\begin{proof}
    \begin{align*}
    & \mathbb{E}_{x \sim \mathcal{D}, y \sim \textcolor{red}{\mu}(\cdot|x)} \left[ R(x, y) \nabla_\theta \log \textcolor{blue}{\pi}_\theta(y|x) \right] \\
    &= \sum_{t=0}^{\infty} \mathbb{E}_{s_t, a_t \sim \textcolor{red}{\mu}} \left[ R(s_t, a_t) \cdot \nabla_\theta \log \textcolor{blue}{\pi}_\theta(a_t|s_t) \right] \quad (\text{for empty terms beyond sequence length \(T\), take 0}) \\
    &= \sum_{t=0}^{\infty} \sum_{s \in \mathcal{S}} P(s_t = s | \textcolor{red}{\mu}) \sum_{a \in \mathcal{A}} \textcolor{red}{\mu}(a|s) \cdot \left[ R(s, a) \cdot \nabla_\theta \log \textcolor{blue}{\pi}_\theta(a|s) \right] \\
    &= \sum_{s \in \mathcal{S}} \left( \sum_{t=0}^{\infty} P(s_t = s | \textcolor{red}{\mu}) \right) \left( \sum_{a \in \mathcal{A}} \textcolor{red}{\mu}(a|s) \cdot R(s, a) \cdot \nabla_\theta \log \textcolor{blue}{\pi}_\theta(a|s) \right) \\
    &= \sum_{s \in \mathcal{S}} d_\mu(s) \cdot \mathbb{E}_{a \sim \mu(\cdot|s)} \left[ R(s, a) \cdot \nabla_\theta \log \textcolor{blue}{\pi}_\theta(a|s) \right] \quad (\text{comes from the definition of state occupancy  measure}) \\
    &= \mathbb{E}_{s \sim d_{\textcolor{red}{\mu}}} \mathbb{E}_{a \sim \textcolor{red}{\mu}(\cdot|s)} \left[ R(s, a) \cdot \nabla_\theta \log \textcolor{blue}{\pi}_\theta(a|s) \right] 
    \end{align*}
\end{proof}

\end{lemma}

This lemma is used to connect sequence-level gradient with our assumption on the upper bound of token-level mismatch, and the next one is used to bound the token-level discrepancy growth in auto-regressive generation:

\begin{lemma} [State Occupancy Drift]\label{lemma: State Occupancy Drift}
    Let $d_t^{\textcolor{blue}{\pi}}$ and $d_t^{\textcolor{red}{\mu}}$ be the state occupancy distributions at time $t$ for policies $\textcolor{blue}{\pi}$ and $\textcolor{red}{\mu}$ respectively. The $L_1$ distance between these distributions accumulates linearly over time:
\begin{equation}
    \delta_t = \|d_{\textcolor{blue}{\pi},t} - d_{\textcolor{red}{\mu},t}\|_1 \le  2t \cdot \Delta_{max}
\end{equation}

\begin{proof}

Using the state transition $P(s'|s,a)$:
\begin{align*}
\delta_t &= \sum_{s'} \left| \sum_{s,a} P(s'|s,a) \textcolor{blue}{\pi}(a|s) d_{\textcolor{blue}{\pi},t-1}(s) - \sum_{s,a} P(s'|s,a) \textcolor{red}{\mu}(a|s) d_{\textcolor{red}{\mu},t-1}(s) \right| \\
&\leq \sum_{s'} \left| \sum_{s,a} P(s'|s,a) d_{\textcolor{blue}{\pi},t-1}(s) (\textcolor{blue}{\pi}(a|s) - \textcolor{red}{\mu}(a|s)) \right| + \sum_{s'} \left| \sum_{s,a} P(s'|s,a) \textcolor{red}{\mu}(a|s) (d_{\textcolor{blue}{\pi},t-1}(s) - d_{\textcolor{red}{\mu},t-1}(s)) \right| \\
&\leq \sum_{s,a} \left( \sum_{s'} P(s'|s,a) \right) d_{\textcolor{blue}{\pi},t-1}(s) |\textcolor{blue}{\pi}(a|s) - \textcolor{red}{\mu}(a|s)| + \sum_{s} \left( \sum_a \textcolor{red}{\mu}(a|s) \sum_{s'} P(s'|s,a) \right) |d_{\textcolor{blue}{\pi},t-1}(s) - d_{\textcolor{red}{\mu},t-1}(s)| \\
&\leq \delta_{t-1} + 2\Delta_{max}
\end{align*}

Thus, $\delta_t = \|d_{\textcolor{blue}{\pi},t} - d_{\textcolor{red}{\mu},t}\|_1 \le 2t \cdot \Delta_{max}$.
\end{proof}

\end{lemma}

\noindent Now we turn to the proof of Theorem \ref{thm:horizon}:
\begin{proof}
    \begin{align*}
         &\left \| \nabla_{\theta} \mathcal{J}_{actual}(\theta)-\nabla_{\theta} \mathcal{J}(\theta) \right \| \\ =&\left \|  \mathbb{E}_{x \sim p_x}  \mathbb{E}_{y \sim \textcolor{blue}{\pi}(\cdot|x,\theta)} \left[ \nabla_{\theta} \log \textcolor{blue}{\pi}(y|x,\theta) R(x,y)\right]-  \mathbb{E}_{x \sim p_x}  \mathbb{E}_{y \sim \textcolor{red}{\mu}(\cdot|x,\theta)} \left[ \nabla_{\theta} \log \textcolor{blue}{\pi}(y|x,\theta) R(x,y)\right]  \right \|_2 \\
         =& \left \| \sum_{t=0}^{T-1} \sum_s (d_{\textcolor{blue}{\pi}, t}(s) \sum_{a}R(s, a)\textcolor{blue}{\pi}_\theta(a|s) \nabla_\theta \log \textcolor{blue}{\pi}_\theta(a|s) - d_{\textcolor{red}{\mu}, t}(s)\sum_{a}R(s, a)\textcolor{red}{\mu}_\theta(a|s) \nabla_\theta \log \textcolor{blue}{\pi}_\theta(a|s))  \right \|_2 \\
         \leq&  \sum_{t=0}^{T-1} \sum_s  \sum_{a} \left \|d_{\textcolor{blue}{\pi}, t}(s) R(s, a)\textcolor{blue}{\pi}_\theta(a|s) \nabla_\theta \log \textcolor{blue}{\pi}_\theta(a|s) - d_{\textcolor{red}{\mu}, t}(s)R(s, a)\textcolor{red}{\mu}_\theta(a|s) \nabla_\theta \log \textcolor{blue}{\pi}_\theta(a|s)  \right \|_2 \\
         \leq& B\sum_{t=0}^{T-1} \sum_s  \sum_{a} \left |d_{\textcolor{blue}{\pi}, t}(s)\textcolor{blue}{\pi}_\theta(a|s)  - d_{\textcolor{red}{\mu}, t}(s)\textcolor{red}{\mu}_\theta(a|s)  \right | \\
         \leq& B(\sum_{t=0}^{T-1} \sum_s  \sum_{a} \left |d_{\textcolor{blue}{\pi}, t}(s)\textcolor{blue}{\pi}_\theta(a|s)  - d_{\textcolor{blue}{\pi}, t}(s)\textcolor{red}{\mu}_\theta(a|s)  \right | + \sum_{t=0}^{T-1} \sum_s  \sum_{a} \left |d_{\textcolor{blue}{\pi}, t}(s)\textcolor{red}{\mu}_\theta(a|s)  - d_{\textcolor{red}{\mu}, t}(s)\textcolor{red}{\mu}_\theta(a|s)  \right |) \\
         \leq& B(\sum_{t=0}^{T-1} 2\Delta_{max}+ \sum_{t=0}^{T-1} \delta_{t}) \\
         \leq& 2B\Delta_{max} \cdot T^2
    \end{align*}
\end{proof}

\section{Hyperparameters for reproducibility} \label{Appendix:hyperparameter}

Here we provide some key configurations in our main experiment for reproducibility purpose.
\begin{table}[htbp]
\centering
\caption{Key configurations in experiment}
\label{tab:ppo_hyperparams}
\begin{tabularx}{\textwidth}{l p{4.5cm} X}
\toprule
\textbf{Category} & \textbf{Hyperparameter} & \textbf{Description \& Value} \\
\midrule
\textbf{Optimization} 
                    & \texttt{Initial (Actor) LR} & 1e-6 \\
                    & \texttt{weight\_decay} & 0.01 \\
                    & \texttt{grad\_clip\_threshold} & 1 \\
                    & \texttt{(train)batch\_size} & 64 \\
                    & \texttt{ppo\_mini\_batch\_size} & 64, i.e fully on-policy \\
                    & \texttt{ppo\_epoch} & 1 \\
                    & \texttt{Min\_LR\_Ratio} & 0.1 \\
\midrule
\textbf{Algorithm}  
                    & \texttt{Advantage estimator type} & rloo \\
                    & \texttt{clip\_range} ($\epsilon$) & 0.28(high)/0.2(low) \\
                    & \texttt{Discount factor} ($\gamma$) & 1 \\
                    & \texttt{KL coefficient} ($\beta$) & 0 \\
                    & \texttt{TIS threshold} ($C$) & 2 \\
                    & \texttt{MIS threshold} ($C$) & 2 \\
                    & \texttt{Rejection sampling} & True(Ignore the samples with all 0/1 rewards) \\

\midrule
\textbf{Resource}   
                    & \texttt{Rollouts per prompt($n$)} & 16 \\
                    & \texttt{mixed\_precision\_training} & True \\
                    & \texttt{max\_response\_length} & 8192 \\
                    & \texttt{n\_gpus\_per\_node} & 8 \\
                    & \texttt{nnode} & 2 \\
                    & \texttt{GPU type} & H100 \\
\midrule
\textbf{Others}
                    & \texttt{Training engine} & fsdp \\
                    & \texttt{Inference engine} & vllm \\
                    & \texttt{Rollout Temperature} & 1 \\
\bottomrule
\end{tabularx}
\end{table}
\end{document}